\newtheorem{theorem}{Theorem}
\newtheorem{lemma}[theorem]{Lemma}
\newtheorem{corollary}[theorem]{Corollary}
\newtcolorbox{mytextbox}[1][]{%
  sharp corners,
  enhanced,
  colback=white,
  attach title to upper,
  #1
}
\title{Beyond Words: A Mathematical Framework for Interpreting Large Language Models}
\author{Javier González}
\author{Aditya V. Nori}
\affil{Microsoft Research}
\date{}
\begin{document}

\maketitle


\begin{abstract}
Large language models (LLMs) are powerful AI tools that can generate and comprehend natural language text and other complex information. However, the field lacks a \textit{mathematical framework} to systematically describe, compare and improve LLMs. We propose \textsc{Hex} a framework that clarifies key terms and concepts in LLM research, such as hallucinations, alignment, self-verification and chain-of-thought reasoning. The \textsc{Hex} framework offers a precise and consistent way to characterize LLMs, identify their strengths and weaknesses, and integrate new findings. Using \textsc{Hex}, we differentiate chain-of-thought reasoning from chain-of-thought prompting and establish the conditions under which they are equivalent. This distinction clarifies the basic assumptions behind chain-of-thought prompting and its implications for methods that use it, such as self-verification and prompt programming.

Our goal is to provide a formal framework for LLMs that can help both researchers and practitioners explore new possibilities for generative AI. We do not claim to have a definitive solution, but rather a tool for opening up new research avenues. We argue that our formal definitions and results are crucial for advancing the discussion on how to build generative AI systems that are safe, reliable, fair and robust, especially in domains like healthcare and software engineering.
\end{abstract}


\section{Introduction}\label{sec:introduction}

Large language models (LLMs) are powerful tools that can generate text responses to various input prompts, such as queries or commands. To interact with an LLM, a human user provides a prompt and receives a response from the model, which is based on its internal algorithms. The user then evaluates the response, and decides whether it answers the prompt satisfactorily. This process may be repeated several times until the user reaches a desired outcome (or gives up).

State-of-the-art LLMs, such as PaLM \cite{Chowdhery2022PaLMSL}, GPT-4 \cite{OpenAI2023GPT4TR} or LLaMA \cite{meta}, can exhibit both remarkable and baffling behaviours, depending on the task. Peter Lee captures this paradox by saying that they are both "\textit{smarter and dumber than you}" at math, statistics and logic~\cite{schubert2023gpt4}. What causes this discrepancy? Research has primarily focused on testing LLMs on different tasks so the user can gain insights about their behaviour. For GPT-4, a large collection of behaviours have been studied on several tasks \cite{Bubeck2023SparksOA}. Benchmarking efforts have been conducted in various domains such as causal discovery \cite{DBLP:journals/corr/abs-2305-00050}, summarization \cite{Zhang2023BenchmarkingLL} and reasoning \cite{sawada2023arb}. These research directions are reasonable, given that most LLMs are black-boxes that we cannot easily inspect, interpret or explain. 


In this paper, we offer a new perspective on interpreting LLMs and present a mathematical framework that formalises and generalises what is already known about LLMs. To advance LLM research, we pursue two objectives: First, we define key LLM concepts such as alignment \cite{shen2023large}, hallucinations \cite{Ji_2023} and chain-of-thought  reasoning \cite{wei2023chainofthought} using our mathematical framework. Second, we  provide a reasoning tool that facilitates a common understanding of existing LLM research and form a basis for exploring new questions and challenges.  Our framework, called \textsc{Hex}, is based on commutative diagrams that relate different levels of abstraction and computation in LLMs. The framework itself does not aim to provide all the answers but it represents a framework that can be used by researchers in LLMs to share, position and reason about their findings. 

We show how \textsc{Hex} can capture the essence of LLMs as abstract execution machines that use natural language as an interface, and how it can reveal the assumptions and implications of various methods that use LLMs, such as self-verification and prompt programming. We also show how \textsc{Hex} can formalize and evaluate the output of LLMs using distance metrics and compatibility relations. 

Our goal is to provide a formal framework for LLMs that can help both researchers and practitioners explore new possibilities for generative AI. We do not claim to have a definitive solution, but rather a tool for opening up new research avenues. We argue that our formal definitions and results are crucial for advancing the discussion on how to build generative AI systems that are safe, reliable, fair and robust, especially in domains like healthcare and  software engineering.


The rest of the paper is organized as follows. Section~\ref{sec:math-framework} introduces the \textsc{Hex} framework and its basic definitions. Section~\ref{sec:concepts_definitions} shows how the \textsc{Hex} framework can formalize common LLM concepts, such as hallucination, alignment, and chain-of-thought reasoning. Section~\ref{sec:cot} discusses the conditions of when chain-of-thought prompting elicits chain-of-thought reasoning, and their broader implications for LLM methods and how they are used in practice. Section~\ref{sec:discussion} concludes the paper and outlines future work.

\section{The mathematical framework}
\label{sec:math-framework}

LLMs are abstract execution machines that use natural language as an interface. Given an input prompt that captures the user's intention, an LLM produces an output answer in 
the form of text that is presented to the user. The prompt aims to solve a problem, such as a mathematical operation, a summarization, or the generation of a certain type of text. The  prompt always expresses an  \emph{intention} to execute some operations based on some provided information. The LLM, which operates as a black-box, uses the internal memory acquired at training time to provide an answer by sampling from some probability distribution conditioned on the input prompt\footnote{This is computed by choosing the next word that best fits the previous text, and repeating the process for each word.}.

This process has become extremely powerful in the latest generation of LLMs, creating a new set of terms for AI research and practice. Alignment, hallucinations, chain-of-thought reasoning, self-verification and prompt programming are some of the concepts that AI researchers and practitioners use routinely -- our work allows us to formally define these concepts using a unified mathematical framework (Section \ref{sec:concepts_definitions}). We introduce the  definitions  needed for our framework next.

\begin{description}
\item [Concrete state:] A concrete state is a function that assigns values from a concrete domain to a (finite) set of variables. We use $\mathcal{V}$ to denote the set of variables, and $\mathcal{C}$ to denote the concrete domain. We write $\sigma: \mathcal{V} \rightarrow \mathcal{C}$ to indicate that $\sigma$ is a concrete state. For example, suppose $\mathcal{V} = \{x, y\}$ and $\mathcal{C} = \mathbb{Z}$, the set of integers. Then, one possible concrete state is $\sigma = \{ x\mapsto 1, y\mapsto 2 \}$, which means that $\sigma(x) = 1$ and $\sigma(y) = 2$. The set of all concrete states over $\mathcal{V}$ and $\mathcal{C}$ is denoted by $2^{\mathcal{V} \rightarrow \mathcal{C}}$. 

\item[Problem:] Let $q : 2^{\mathcal{V} \rightarrow \mathcal{C}} \rightarrow 2^{\mathcal{V} \rightarrow \mathcal{C}}$ be a query, and let $\sigma$ be a concrete state. Informally, the query $q$ expresses what a user wishes to compute over a state $\sigma$. For example, $q$ might select some variables, apply some operations, or filter some conditions.  A problem is defined by the query-state pair $(q,\sigma)$.

\item [Prompt:] A prompt is a natural language expression of a problem $(q, \sigma)$ that an LLM can solve\footnote{Both elements can be mixed in some prompts. Although our theory is general, we separate them to simplify our exposition.}. A prompt $(\overline{q}, \overline{\sigma})$ consists of two strings: a query string $\overline{q}$ that represents the query $q$ in words, and a state string $\overline{\sigma}$ that describes the concrete state $\sigma$ in words. We denote by $\mathcal{T}$ the set of all strings.

\item [LLM:] Given a prompt $(\overline{q}, \overline{\sigma})$ as input, an LLM $M$ interprets this to produce a natural language or string output. 

\item [Abstract state:]   A state string $\overline{\sigma}$ has a meaning that can be captured by a vector, which is an abstract state $\hat{\sigma}$  or an embedding. An LLM works with these vectors to manipulate state strings. We denote by $\mathcal{A}$ the set of all abstract states.

\item [Abstraction map:] An abstraction map $\alpha$ is a function that transforms a concrete state $\sigma$ into an abstract state $\hat{\sigma}$. The function $\alpha$ consists of two sub-functions: (i) $\alpha_c$, which converts the concrete state $\sigma$ to a state string $\bar{\sigma}$ (this would be part of the prompt design process) and (ii) $\alpha_a$, which encodes the text state $\bar{\sigma}$ into an abstract state $\hat{\sigma}$  (this is the  LLM encoder). In other words, $\alpha(\sigma) = (\alpha_a \circ \alpha_c) (\sigma)$.

\item [Concretization map:] a concretization map $\gamma$ concretizes an abstract state $\hat{\sigma}$ by producing a concrete state $\sigma$. The function $\gamma$ consists of two steps: (i) $\gamma_a$, which decodes the abstract state $\hat{\sigma}$ into a state string $\bar{\sigma}$ using an LLM (the LLM decoder), and (ii) $\gamma_c$, which interprets the state string $\bar{\sigma}$ as a description of the concrete output state (the human interpretation of the LLM output). In other words, $\gamma(\hat{\sigma}) = (\gamma_c \circ \gamma_a) (\hat{\sigma})$.
\end{description}

Figure~\ref{fig:hex} shows the \textsc{Hex} diagram\footnote{A commutative diagram with six edges.} for an LLM $M$ and a problem $(q,\sigma)$, where $q$ is a query and $\sigma$ is a concrete state. The corners of the diagram are states and the vertical and horizontal edges connect different levels of abstraction. The vertical edges are based on the abstraction maps $\alpha_c$ and $\alpha_a$, and the concretization maps $\gamma_a$ and $\gamma_c$. The horizontal edges are based on the functionals $\Lambda_q$ and $\Lambda_{\hat{q}}$, which compute the query $q$ or its abstract counterpart $\hat{q}$ on the corresponding states. These functionals are defined as follows:
\begin{description}
   \item $\Lambda_q(\sigma) \stackrel{\mathtt{def}}{=} q(\sigma)$ applies the query $q$ to the concrete state $\sigma$.
   \item $\Lambda_{\hat{q}}(\hat{\sigma}) \stackrel{\mathtt{def}}{=} \hat{q}(\hat{\sigma})$ computes the abstract query $\hat{q}$ (this is the LLM M's interpretation of the 
   query string $\overline{q}$) on the abstract state $\hat{\sigma} = \alpha(\sigma)$.
\end{description}

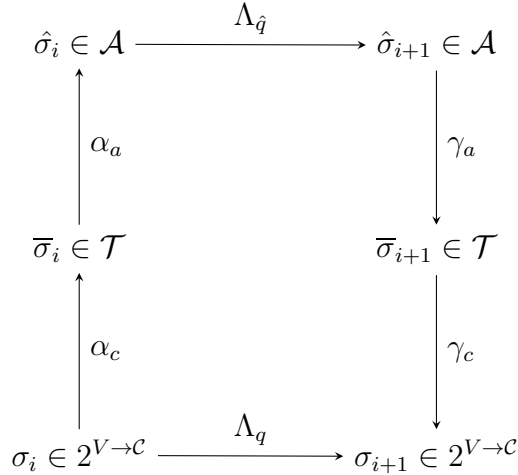
\begin{figure}[t!]
\centering
\begin{tikzpicture}
  \matrix (m) [matrix of math nodes,row sep=5em,column sep=6em,minimum width=2em]
  {  \hat{\sigma_i} \in \mathcal{A} & \hat{\sigma}_{i+1} \in \mathcal{A} \\
     \overline{\sigma}_i \in  \mathcal{T}&  \overline{\sigma}_{i+1} \in  \mathcal{T} \\
     \sigma_i \in 2^{V \rightarrow \mathcal{C}} & \sigma_{i+1}\in 2^{V \rightarrow \mathcal{C}} \\};
  \path[-stealth]
    (m-3-1) edge node [right] {$\alpha_c$} (m-2-1)
    (m-2-1) edge node [right] {$\alpha_a$} (m-1-1)
    (m-1-2) edge node [right] {$\gamma_a$} (m-2-2)
    (m-2-2) edge node [right] {$\gamma_c$} (m-3-2)
    (m-3-1) edge node [above] {$\Lambda_{q}$} (m-3-2)
    (m-1-1) edge node [above] {$\Lambda_{\hat{q}}$} (m-1-2);
\end{tikzpicture}
\caption{The \textsc{Hex} diagram for a problem $(q, \sigma_i)$.}
\label{fig:hex}
\end{figure}

A \textsc{Hex} diagram represents different ways or paths to solve a problem $(q, \sigma_i)$. Each path starts from an initial concrete state $\sigma_i$ and ends at a final concrete state $\sigma_{i+1}$. The diagram is \textit{commutative} if and only if every path leads to a \textit{compatible} $\sigma_{i+1}$. 
In other words, if and only if  
$\Lambda_q(\sigma_i) \equiv (\gamma_c \circ \gamma_a \circ \Lambda_{\hat{q}} \circ \alpha_a \circ \alpha_c) (\sigma_i)$
, where $\equiv$ is a suitable compatibility relation. Section~\ref{subsec:hallucination-and-misalignment} gives some examples of compatibility relations, such as equality. 

The path starting with the state string $\overline{\sigma}_i$ and ending in the state string $\overline{\sigma}_{i+1}$ traces  the execution of the LLM $M$.
The abstract states $\hat{\sigma}_i$ and $\hat{\sigma}_{i+1}$ and the function $\Lambda_{\hat{q}}$ are unobservable when the LLM is a blackbox.

\subsection{\textsc{Hex} examples}
\label{subsec:examples}

In this section, we show how the \textsc{Hex} elements work together via two examples. One example is an arithmetic problem that needs logical thinking, and the other is a code generation problem that needs both logic and some creativity.

\subsubsection{Simple arithmetic problem}\label{sec:example_arithmetic}

\textbf{Problem}: Solve for $z = x+y$ where $x =12$, $y=13$.

The \textsc{Hex} instance for this problem is defined as follows.
\begin{itemize}
    \item Concrete state: The set of variables $\mathcal{V} = \{x, y, z\}$, where $x$, $y$ and $z$ are the variables of the problem.  The concrete domain $\mathcal{C}$ is the set of integers $\mathbb{Z}$. The concrete initial state $\sigma_i = \{x \mapsto 12, y \mapsto 13, z \mapsto \perp, \}$, where $\perp$ represents an undefined value.
    \item Problem: The query $q = \lambda \sigma\ .\ \sigma(x)+\sigma(y)$, and the problem is the pair $(q, \sigma_i)$.
    \item The functional $\Lambda_q(\sigma_i)$ computes the concrete output state $\sigma_{i+1}$.
    \item $\sigma_{i+1} = \{x\mapsto 12, y\mapsto 13, z\mapsto 25 \}$. This is the correct answer that the user expects from the LLM, in other words, the ground truth.
    \item Abstraction map: The function $\alpha_c$ maps the concrete input state $\sigma_i$ to a state string $\overline{\sigma}_i$ which is encoded within the text prompt (shown below). The LLM then interprets the state string $\overline{\sigma}_i$ (represented by the substring \texttt{"x = 12, y = 13"}) and produces  $\hat{\sigma}_i = \alpha_c(\overline{\sigma}_i)$ which corresponds to an unobservable internal LLM state. 
    \item The functional $\Lambda_{\hat{q}}$ is an unknown, functional that computes the abstract query $\hat{q}$, specified by the query string $\overline{q} = $ \texttt{"Solve for z = x+y"} of the prompt shown below.
    \end{itemize}

\begin{mytextbox}
\textbf{Simple arithmetic problem} \\

\textbf{Prompt}: 

\begin{verbatim}
final_answer = GPT4("Solve for z = x+y, where x = 12, y = 13.")
\end{verbatim}

\textbf{Answer:}
\begin{verbatim}
The value of z is 25.
\end{verbatim}
\end{mytextbox}
   
\begin{itemize}
    \item The LLM output for \textbf{Prompt} is "\texttt{The value of z is 25.}", and this is represented by the state string $\overline{\sigma}_{i+1}$. 
    \item Concretization map: The concretization function $\gamma_c$ interprets the text state $\overline{\sigma}_{i+1}$ as a concrete state $\{x\mapsto 12, y\mapsto 13, z\mapsto 25 \}$ which is equal to the ground truth concrete state $\sigma_{i+1}$. This relies on the LLM computing the correct answer, which means that the corresponding \textsc{Hex} diagram is commutative (and consequently, there is alignment).
     
\end{itemize}

This example has a unique and precise solution, so we can easily check that the \textsc{Hex} diagram commutes. However, as we will see later, this is not always the case.

\subsubsection{Sketching a unicorn}

\textbf{Problem}: Write TikZ code to draw a unicorn. 

The \textsc{Hex} instance for this problem is defined as follows.
\begin{itemize}
    \item Concrete state: We use a set of variables $\mathcal{V} = \{x, body, legs, head, tail, horn, m\}$ to represent the information needed to draw an animal using TikZ, which is a language for creating graphics in LaTeX. The variable $x$ holds the name of the animal we want to draw (string value), such as unicorn, and the variables \textit{body}, \textit{legs}, \textit{head}, \textit{tail}, \textit{horn} hold the TikZ code for drawing each part of the animal, such as a circle, a line, or a curve. The variable $m$ is a slack variable that can be used to store any additional information or code that is not captured by the other variables. We start with an initial state $\sigma_i$, where $x$ is assigned the value \texttt{"unicorn"} and all the other variables are assigned the undefined value $\perp$, meaning that we do not have any code for drawing the unicorn yet.

    \item Problem: The query $q$ is a TikZ code generator that takes $\sigma_i$ as input draws a diagram of $\sigma_i(x)$, and the problem is the pair $(q, \sigma_i)$.
    
    \item The functional $\Lambda_q(\sigma_i)$ produces the concrete output state $\sigma_{i+1}$. In particular, $\sigma_{i+1} = \{x \mapsto \texttt{"unicorn"},\ body \mapsto  \texttt{code for body},\ legs\mapsto \texttt{code for legs},\ head \mapsto \texttt{code for head},\\ tail\mapsto \texttt{code for tail},\ horn\mapsto \texttt{code for horn},\ m \mapsto\perp \}$. This is the expected answer for the user, or the ground truth, which we define as any code that sketches the unicorn. However, there are many possible ways to write such code. In Section \ref{sec:llm_evaluation}, we explain how we measure the quality of the LLM's answers.
    
    \item Abstraction map: We use the function $\alpha_c$ to turn the concrete input state $\sigma_i$ into a state string $\overline{\sigma}_i$ that we can include in the text prompt (shown below). The state string is $\overline{\sigma}_i = $\texttt{"unicorn"} that the LLM has to interpret. The LLM then produces an internal state $\hat{\sigma}_i = \alpha_c(\overline{\sigma}_i)$ that we cannot observe. 
    
    \item We also have an unknown functional $\Lambda_{\hat{q}}$ that takes the string $\overline{q} =$ \texttt{"draw in Tikz"} from the prompt and computes the abstract query $\hat{q}$.
    \end{itemize}

\begin{mytextbox}
\textbf{Prompt}: 
\begin{verbatim}
answer = GPT4("You are a programmer that uses TikZ to generate diagrams: 
 draw a unicorn.")
\end{verbatim}

\textbf{Answer}: 
\begin{verbatim}
\begin{tikzpicture}
% Body
\draw[fill=gray!50] (0, 0) circle (1cm);
\draw[fill=gray!50] (1.5, 0) circle (1cm);
\draw (1, 0) ellipse (0.7cm and 1.5cm);
% Legs
\draw (0.4, -1) -- (0.2, -2.5);
\draw (0.6, -1) -- (0.8, -2.5);
\draw (1.9, -1) -- (1.7, -2.5);
\draw (2.1, -1) -- (2.3, -2.5);
% Head
\draw[fill=gray!50] (2.7, 1.4) circle (0.7cm);
\draw (2.3, 1.5) -- (2.1, 1.2);
% Tail
\draw (0.0, 0.7) to[out=120, in=30] (-1, 1.2);
% Horn
\draw[thick] (2.7, 2.05) -- (2.9, 3);
\end{tikzpicture}
\end{verbatim}
\textbf{Result after compiling the generated code:}
\begin{center}
\begin{tikzpicture}
\draw[fill=gray!50] (0, 0) circle (1cm);
\draw[fill=gray!50] (1.5, 0) circle (1cm);
\draw (1, 0) ellipse (0.7cm and 1.5cm);
\draw (0.4, -1) -- (0.2, -2.5);
\draw (0.6, -1) -- (0.8, -2.5);
\draw (1.9, -1) -- (1.7, -2.5);
\draw (2.1, -1) -- (2.3, -2.5);
\draw[fill=gray!50] (2.7, 1.4) circle (0.7cm);
\draw (2.3, 1.5) -- (2.1, 1.2);
\draw (0.0, 0.7) to[out=120, in=30] (-1, 1.2);
\draw[thick] (2.7, 2.05) -- (2.9, 3);
\end{tikzpicture}
\end{center}
\end{mytextbox}

\begin{itemize}
    \item The LLM output for \textbf{Prompt} is "\texttt{begin\{tikzpicture\}... end\{tikzpicture\}}", and this is the string state $\overline{\sigma}_{i+1}$.
    \item Concretization map: The concretization function $\gamma_c$ interprets the text state $\overline{\sigma}_{i+1}$ as a concrete state. In this example, the text state is a piece of code, so we can compile it to see what it does. The compiled code indeed draws a shape that resembles an unicorn.
\end{itemize}

Unlike the arithmetic example, this problem has multiple solutions and the relation that makes the \textsc{Hex} diagram commute depends on the scenario. For example, different human evaluations may have different standards for how much the figure resembles an unicorn. Others may care more about the clarity of the code.

\section{LLM concepts with \textsc{Hex}}\label{sec:concepts_definitions}

This section shows how the \textsc{Hex} framework can formalize common LLM concepts.

\subsection{Hallucination and misalignment}
\label{subsec:hallucination-and-misalignment}

The LLM literature often uses the terms \textit{hallucination} and \textit{misalignment} to describe two common issues with these models. Hallucination means that LLMs produce wrong or irrelevant responses to prompts, but make them sound reasonable or coherent \cite{Ji_2023}. This can be dangerous when LLMs are used in critical domains where accuracy and safety are important \cite{mckenna2023sources}. Alignment means that LLMs act in accordance with their human users' intentions \cite{shen2023large}. LLMs that are misaligned act differently from what their users want. This can also cause harm, such as giving wrong answers, generating biased outputs or discriminating results \cite{Wolf2023}. Alignment involves tuning LLMs to encourage desired behaviors and discourage undesired ones.

We use the \textsc{Hex} diagram to discuss the terms hallucination and misalignment, which are generic terms in the literature. Given a problem $(q, \sigma)$, we also need a compatibility relation $\equiv$ to compare the equivalence of two concrete states. Hallucination and misalignment occur when the \textsc{Hex} diagram for $(q, \sigma)$ does not commute, meaning that the paths from $\sigma$ to the output concrete state do not produce equivalent states under $\equiv$. Formally, this means that the compatibility relationship $(\gamma \circ \Lambda_{\hat{q}} \circ \alpha)(\sigma) \equiv \Lambda_q(\sigma)$,
where $(\gamma \circ \Lambda_{\hat{q}} \circ \alpha)(\sigma)$ is the answer generated by the LLM, does not hold.

We want to detect hallucinations or misalignments by comparing states, so we need a suitable notion of equivalence ($\equiv$) that captures the kind of errors we are interested in. For example, consider the simple arithmetic problem in Section~\ref{sec:example_arithmetic}, where the correct answer is $\{ x\mapsto 12, y\mapsto 13, z\mapsto 25, w \mapsto \perp \}$. If the LLM outputs "\texttt{The sum of x and y is 29}", this corresponds to the state $\{ x\mapsto 12, y\mapsto 13, z\mapsto 29, w \mapsto \perp \}$ under the concretisation function $\gamma$. This state is a hallucination, because it does not match the correct answer. 
One possible way to define an equivalence relation is to compare the values of variables in two concrete states $\sigma_1$ and 
$\sigma_2$. We say that $\sigma_1$ and $\sigma_2$ are equivalent, if they have the same value for every variable in 
$\mathcal{V}$. Formally, $(\sigma_1 \equiv \sigma_2) \iff \forall x \in \mathcal{V}\ .\ \sigma_1(x) = \sigma_2(x)$. According to this definition, the only state equivalent to $\{ x\mapsto 12, y\mapsto 13, z\mapsto 25, w \mapsto \perp \}$ is itself.

However, suppose the LLM outputs "\texttt{The sum of x and y is a tiger}". This answer is wrong and also misaligned, because it contains a nonsensical output that is not an integer. In this case, the only way for the two states to be aligned is if every variable in the scope has an integer value. That is, $(\sigma_1 \equiv \sigma_2) \iff \forall x \in \mathcal{V}\ .\ \mathit{typeof}(\sigma_1(x)) = \mathit{typeof}(\sigma_2(x)) = \mathtt{int}$.

To evaluate how well an LLM performs a task, we need to measure the extent of its hallucinations or misalignments. How can we do that? We discuss this in the next section.

\subsection{Evaluating the output of an LLM}\label{sec:llm_evaluation}

Given a problem $(q, \sigma)$, the output of an LLM $M$ is defined by the state string $\overline{\sigma} = (\gamma_a \circ \Lambda_{\hat{q}} \circ \alpha)(\sigma)$. Applying the  $\gamma_c$ function to $\overline{\sigma}$ results in the concrete state $\gamma_c(\overline{\sigma})$. The compatibility relationship introduced in Section~\ref{subsec:hallucination-and-misalignment} is crucial for determining whether the \textsc{Hex} diagram for $(q, \sigma)$ is commutative. If it is  not, a natural question to ask is how much that LLM's answer deviates from the one that would make the \textsc{Hex} diagram commutative.

We can evaluate how well an LLM solves a problem by comparing its output state, $\overline{\sigma}$, with a reference state that represents the correct answer, $\Lambda_q(\sigma)$. To do this, we need a distance metric $\Delta:2^{\mathcal{V} \rightarrow \mathcal{C}} \times 2^{\mathcal{V} \rightarrow \mathcal{C}} \rightarrow \mathbb{R}^{+}$ that quantifies how different two states are. The choice of $\Delta$ depends on the problem scenario, but it must satisfy a consistency condition: the \textsc{Hex} diagram must commute, meaning that the concrete state derived from the LLM's output, $\gamma_c(\overline{\sigma})$, must have zero distance from the reference ground truth state, $\Lambda_q(\sigma)$.

For example, suppose we are solving the problem of adding 12 + 13. If the LLM's output is $\overline{\sigma}$ and we apply $\gamma_c$ to it, we get $\gamma_c(\overline{\sigma}) = (12, 13, 19)^T$ (this is the vectorized state). This is the concrete state that represents the LLM's answer. However, the correct answer is 25, which we can get by applying $\Lambda_q$ to the input $\sigma$. This gives us the reference ground truth state $\Lambda_q(\sigma) = (12, 13, 25)^T$. To measure how far the LLM's answer is from the correct answer, if we use the L1-norm as the distance function, we get $\Delta(\gamma_c(\overline{\sigma}), \Lambda_q(\sigma)) = |19-25| = 6$. This means that the LLM's answer is off by 6 units. The \textsc{Hex} diagram does not commute in this case, because $\gamma_c(\overline{\sigma}) \neq \Lambda_q(\sigma)$.

Some problems need complex distance methods to measure how close the diagram is to being commutative. This is hard when there are many valid solutions for the ground truth $\Lambda_q(\sigma)$, such as in code generation where different solutions are acceptable. The user of the LLM needs to define these metrics to capture the right properties for these problems.

\section{Chain-of-Thought}
\label{sec:cot}

Chain-of-Thought (CoT) \cite{wei2023chainofthought} is a method for improving the ability of LLMs to perform various reasoning tasks, such as answering questions, solving math problems, or generating explanations. The idea is that some problems are too complex or abstract for the LLM to solve directly, and it can benefit from breaking them down into simpler or more concrete sub-problems. To do this, the method involves prompting the LLM to generate intermediate steps that lead to the final answer, and using these steps as additional context or feedback for the next step. 

For example, imagine drawing a hexagon with a circle inside it, and a square inside the circle. The steps 
(\textit{CoT reasoning}) are:
  \begin{enumerate}
    \item Draw a hexagon.
    \item Draw the circle inside the hexagon.
    \item Draw a square inside the circle.
  \end{enumerate}

The CoT reasoning method can help the model to avoid errors, gaps, or inconsistencies in its reasoning, and to produce more coherent and transparent outputs.

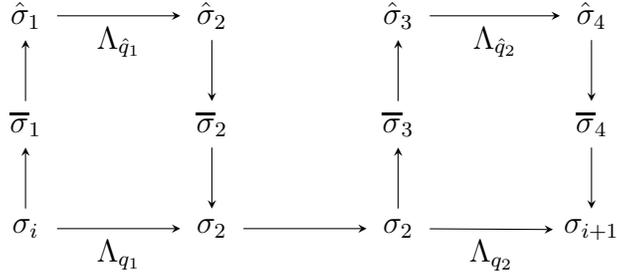
\begin{figure}[t]
\centering
\begin{tikzpicture}
  \matrix (m) [matrix of math nodes,row sep=2em,column sep=4em,minimum width=2em]
  {   \hat{\sigma}_1  & \hat{\sigma}_2  &  \hat{\sigma}_3  & \hat{\sigma}_4    \\
     \overline{\sigma}_1 &  \overline{\sigma}_2 &  \overline{\sigma}_3 &  \overline{\sigma}_4 \\
     \sigma_i  & \sigma_2 & \sigma_2  & \sigma_{i+1} \\};
  \path[-stealth]
    (m-3-1) edge node [right] {} (m-2-1)
    (m-2-1) edge node [right] {} (m-1-1)
    (m-1-2) edge node [right] {} (m-2-2)
    (m-2-2) edge node [right] {} (m-3-2)
    (m-3-1) edge node [below] {$\Lambda_{q_1}$} (m-3-2)
    (m-1-1) edge node [below] {$\Lambda_{\hat{q}_1}$} (m-1-2)
    (m-3-2) edge node[below] {} (m-3-3)
    (m-3-3) edge node [right] {} (m-2-3)
    (m-2-3) edge node [right] {} (m-1-3)
    (m-1-4) edge node [right] {} (m-2-4)
    (m-2-4) edge node [right] {} (m-3-4)
    (m-3-3) edge node [below] {$\Lambda_{q_2}$} (m-3-4)
    (m-1-3) edge node [below] {$\Lambda_{\hat{q}_2}$} (m-1-4);
\end{tikzpicture}
\caption{Chain-of-thought reasoning.}
\label{fig:cot-reasoning}
\end{figure}

To formalize CoT reasoning, we use the \textsc{Hex} framework. Consider a problem $(q, \sigma_i)$, where $q$ is a query and $\sigma_i$ is a concrete input state. We can split this problem into two sub problems $(q_1, \sigma_1)$ and $(q_2, \sigma_2)$, such that the query transformation $\Lambda_q$ is the same as applying $\Lambda_{q_1}$ first and then $\Lambda_{q_2}$, with $\sigma_i = \sigma_1$ and $\sigma_2 = \Lambda_{q_1}(\sigma_1)$, other words, $\Lambda_q(\sigma_i) = (\Lambda_{q_2} \circ \Lambda_{q_1}) (\sigma_i)$.

Figure~\ref{fig:cot-reasoning} illustrates the \textsc{Hex} diagram for CoT reasoning, which combines the \textsc{Hex} diagrams for two sub-problems: $(q_1, \sigma_1)$ and $(q_2, \sigma_2)$,
where $\sigma_i = \sigma_1$.

\begin{lemma}
\label{lem:cot-reasoning}
Suppose we have a problem $(q, \sigma_i)$ that can be solved by splitting it into two sub-problems $(q_1, \sigma_1)$ and $(q_2, \sigma_2)$, such that $\Lambda(\sigma_i) = (\Lambda_{q_2} \circ \Lambda_{q_1}) (\sigma_i)$. 
Suppose also that Figure~\ref{fig:cot-reasoning} shows how to use the CoT reasoning method to solve $(q, \sigma_i)$. Then the CoT reasoning is valid if and only if every possible way of transforming $\sigma_i$ into $\Lambda_q(\sigma_i)$ in the diagram gives the same result, i.e., the diagram commutes.
\end{lemma}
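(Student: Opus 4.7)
The plan is to treat CoT validity as a statement about the composite diagram in Figure~\ref{fig:cot-reasoning}: it is essentially a concatenation of two \textsc{Hex} diagrams glued along the intermediate concrete state $\sigma_2$, so the lemma reduces to recognising that the LLM's two-step path and the ground-truth two-step path give compatible answers if and only if each of the two sub-diagrams commutes. I would first make ``valid CoT reasoning'' precise by declaring it to mean that the concrete output reached by traversing the top row of the composite diagram, namely $(\gamma \circ \Lambda_{\hat{q}_2} \circ \alpha \circ \gamma \circ \Lambda_{\hat{q}_1} \circ \alpha)(\sigma_i)$, is $\equiv$-compatible with $\Lambda_q(\sigma_i)$. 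By hypothesis the bottom row satisfies $(\Lambda_{q_2} \circ \Lambda_{q_1})(\sigma_i) = \Lambda_q(\sigma_i)$, so once the top row is shown compatible with the bottom row, the claim follows by applying the compatibility notion of Section~\ref{subsec:hallucination-and-misalignment} at each endpoint.

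For the forward direction I would assume CoT validity and read it as commutativity of the outer rectangle of Figure~\ref{fig:cot-reasoning}. Because the inner vertical edge at $\sigma_2$ is shared by both sub-rectangles, compatibility at $\sigma_{i+1}$ together with the identity $\sigma_2 = \Lambda_{q_1}(\sigma_1)$ forces each of the two sub-\textsc{Hex} diagrams to commute individually, so every path in the diagram yields a compatible state. Conversely, assuming each sub-diagram commutes, commutativity of the first gives $(\gamma \circ \Lambda_{\hat{q}_1} \circ \alpha)(\sigma_i) \equiv \Lambda_{q_1}(\sigma_i) = \sigma_2$; feeding this intermediate state into the second sub-diagram and using its commutativity gives $(\gamma \circ \Lambda_{\hat{q}_2} \circ \alpha)(\sigma_2) \equiv \Lambda_{q_2}(\sigma_2) = \Lambda_q(\sigma_i)$, which is exactly CoT validity.

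The main obstacle is the reverse direction, specifically the step where the output of the first sub-diagram is only $\equiv$-equivalent to $\sigma_2$ rather than equal to it. For the argument to chain through, $\equiv$ must be preserved by the second composite $\gamma \circ \Lambda_{\hat{q}_2} \circ \alpha$: if $\sigma \equiv \sigma'$ then the LLM's response on the string encoding of $\sigma$ must be $\equiv$-compatible with its response on the string encoding of $\sigma'$. I would therefore either strengthen the hypothesis to require that $\equiv$ is a congruence with respect to the maps $\alpha$, $\gamma$, and $\Lambda_{\hat{q}_k}$, or restrict attention to equivalence relations already known to have this property (for example equality, or the type-based relation discussed in Section~\ref{subsec:hallucination-and-misalignment}). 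Under this closure assumption the two directions combine to give the ``if and only if'', and the resulting proof stays entirely at the level of diagram chasing, with no additional computation required.
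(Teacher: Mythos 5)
The paper's own proof of this lemma is a single sentence: ``Follows from that fact that commutativity implies no hallucination or misalignment.'' It essentially treats the lemma as a definitional unwinding; ``valid CoT reasoning'' is never given an independent formal definition elsewhere in the paper, so the lemma reads as \emph{fixing} that term to mean commutativity of Figure~\ref{fig:cot-reasoning}, with the one-line ``proof'' serving only as a justification for why commutativity is the right notion of validity.

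You took a substantially more ambitious route: you gave ``valid'' an independent meaning (the LLM's two-step composite produces an $\equiv$-compatible output at $\sigma_{i+1}$) and then tried to prove that this is equivalent to commutativity of the full composite diagram. Your observation at the end --- that for the reverse direction to chain through, $\equiv$ must be a congruence with respect to $\alpha$, $\gamma$, and $\Lambda_{\hat q_k}$ --- is a genuine and useful point that the paper never makes explicit, and it identifies a real hypothesis that would be needed for the lemma to hold under your reading of ``valid.''

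However, your forward direction does not go through as written. You claim that compatibility of the outer paths at $\sigma_{i+1}$, together with the shared vertical edge at $\sigma_2$, ``forces each of the two sub-\textsc{Hex} diagrams to commute individually.'' This is false in general: the first LLM step could hallucinate, and the second step could compensate for that hallucination in just the right way so that the composite is $\equiv$-compatible with the ground truth while neither sub-diagram commutes. Under your interpretation, ``valid'' (outer-rectangle compatibility) is therefore strictly weaker than commutativity of the full diagram, and the ``if and only if'' you set out to prove actually fails. To repair the argument you would have to strengthen ``valid'' to require each reasoning step to be individually sound --- i.e., to require both sub-diagrams to commute --- but at that point the equivalence collapses to a near-tautology, which is exactly where the paper's one-line proof already sits.
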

\begin{proof}
Follows from that fact that commutativity implies no hallucination or misalignment.
\end{proof}

\begin{figure}[t]
\centering
\begin{tikzpicture}
  \matrix (m) [matrix of math nodes,row sep=2em,column sep=4em,minimum width=2em]
  {  \hat{\sigma}_1  & & \hat{\sigma}_2    \\
     \overline{\sigma}_1 &  &  \overline{\sigma}_2 \\
     \sigma_i  & \sigma_2  & \sigma_{i+1} \\};
  \path[-stealth]
    (m-3-1) edge node [right] {} (m-2-1)
    (m-2-1) edge node [right] {} (m-1-1)
    (m-3-1) edge node [below] {$\Lambda_{q_1}$} (m-3-2)
    (m-1-1) edge node [below] {$\Lambda_{\hat{q_1 . q_2}}$} (m-1-3)
    (m-2-3) edge node [right] {} (m-3-3)
    (m-1-3) edge node [right] {} (m-2-3)
    (m-3-2) edge node [below] {$\Lambda_{q_2}$} (m-3-3);
\end{tikzpicture}
\caption{Chain-of-thought prompting.}
\label{fig:cot-prompting}
\end{figure}
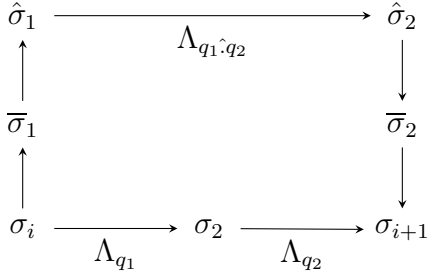

To elicit CoT reasoning, \textit{CoT prompting} embeds it in the LLM's prompt~\cite{wei2023chainofthought}. This method is convenient because it avoids extra calls to the LLM, which can be costly. Figure~\ref{fig:cot-prompting} illustrates the \textsc{Hex} diagram for CoT prompting -- in particular,  we split the problem $(q, \sigma_i)$ into two sub problems $(q_1, \sigma_1)$ and $(q_2, \sigma_2)$, such that 
$\Lambda_q = \Lambda_{q_1} \circ \Lambda_{q_2}$ -- in other words, CoT reasoning holds for the concrete problem. 
Let  $(\overline{q}_1, \_)$ and $(\overline{q}_2, \_)$ be the prompts for the problems $(q_1, \sigma_1)$ and $(q_2, \sigma_2)$ respectively. Then we concatenate the query strings of the subproblem prompts to form the CoT prompt for the original problem: $(\overline{q}_1 . \overline{q_2}, \sigma_i)$. We denote the abstract query corresponding to the CoT prompt by $\hat{q_1 . q_2}$, and the LLM function that takes the CoT prompt as input and produces the text state $\overline{\sigma}_2$ as the output by $\Lambda_{\hat{q_1 . q_2}}$. Then the following holds.

\newpage

\begin{lemma}
\label{lem:cot-prompting}
Suppose we have a problem $(q, \sigma_i)$ that can be solved by splitting it into two sub-problems $(q_1, \sigma_1)$ and $(q_2, \sigma_2)$, such that $\Lambda(\sigma_i) = (\Lambda_{q_2} \circ \Lambda_{q_1}) (\sigma_i)$. 
Suppose also that Figure~\ref{fig:cot-prompting} shows how to use the CoT prompting method to solve $(q, \sigma_i)$. Then CoT prompting is valid if and only if every possible way of transforming $\sigma_i$ into $\Lambda_q(\sigma_i)$ in the diagram gives the same result, i.e., the diagram commutes.
 \end{lemma}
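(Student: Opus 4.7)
The plan is to mirror the structure used for Lemma~\ref{lem:cot-reasoning}, since the statement has the same biconditional shape: validity $\iff$ commutativity. By the discussion in Section~\ref{subsec:hallucination-and-misalignment}, validity of the LLM's answer for a problem $(q, \sigma_i)$ is, by definition, the absence of hallucination/misalignment, i.e., $(\gamma_c \circ \gamma_a \circ \Lambda_{\hat{q_1 . q_2}} \circ \alpha_a \circ \alpha_c)(\sigma_i) \equiv \Lambda_q(\sigma_i)$. So the real work is just identifying the two paths in Figure~\ref{fig:cot-prompting} and matching them against this condition.

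First I would use the decomposition hypothesis $\Lambda_q(\sigma_i) = (\Lambda_{q_2} \circ \Lambda_{q_1})(\sigma_i)$ to rewrite the ground truth as the concrete composite traversing the bottom row through the intermediate state $\sigma_2 = \Lambda_{q_1}(\sigma_i)$. This pins down the "concrete" path of the diagram. The "abstract" path, obtained by lifting $\sigma_i$ through $\alpha_c, \alpha_a$, applying the single composite abstract query $\Lambda_{\hat{q_1 . q_2}}$, and then concretizing via $\gamma_a, \gamma_c$, corresponds exactly to the LLM's answer.

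Next I would argue each direction. For the "if" direction, commutativity of Figure~\ref{fig:cot-prompting} says precisely that these two paths yield $\equiv$-compatible concrete outputs, so by the definition of validity from Section~\ref{subsec:hallucination-and-misalignment}, CoT prompting is valid. For the "only if" direction, validity states that the LLM answer is $\equiv$-compatible with $\Lambda_q(\sigma_i)$; substituting the decomposition $\Lambda_{q_2} \circ \Lambda_{q_1}$ shows that the abstract and concrete paths coincide under $\equiv$, which is exactly commutativity. Both directions therefore reduce, as in Lemma~\ref{lem:cot-reasoning}, to the tautology that commutativity of a \textsc{Hex} diagram is the defining condition for no hallucination.

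The main subtlety, and likely the only thing a reader could object to, is the asymmetry of the diagram: the abstract row has a single edge $\Lambda_{\hat{q_1 . q_2}}$ while the concrete row has two edges $\Lambda_{q_1}, \Lambda_{q_2}$ (with no observable intermediate $\overline{\sigma}$ or $\hat{\sigma}$ in between). I would address this explicitly by noting that the absence of an intermediate abstract state means the diagram has exactly two $\sigma_i$-to-$\sigma_{i+1}$ paths to compare, not three as in Figure~\ref{fig:cot-reasoning}; hence commutativity here is the single equation above, and no stronger condition (such as separate commutativity of each half) is required. This is in fact the whole point of CoT prompting versus CoT reasoning: it trades observable intermediate checkpoints for a single LLM call, and the lemma simply records what must hold for that trade to preserve correctness.
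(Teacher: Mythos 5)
Your argument is the same as the paper's: the paper's entire proof reads ``Follows from that fact that commutativity implies no hallucination or misalignment,'' which is precisely the definitional unwinding you carry out (validity $=$ absence of hallucination/misalignment $=$ the two $\sigma_i$-to-$\sigma_{i+1}$ paths agree under $\equiv$, i.e., the diagram commutes). Your additional remark about the asymmetry of Figure~\ref{fig:cot-prompting} --- a single abstract edge versus two concrete edges, hence only two paths to compare --- is a useful clarification that the paper leaves implicit, but it does not change the route of the proof.
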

 \begin{proof}
Follows from that fact that commutativity implies no hallucination or misalignment.
 \end{proof}

\begin{corollary}
\label{cor:cot}
CoT prompting and CoT reasoning produce the same output for a problem $(q, \sigma)$ if and only if both of these conditions are true:
\begin{itemize}[noitemsep, topsep=0pt]
    \item CoT reasoning holds for the problem $(q, \sigma)$. That is, $\Lambda_q = \Lambda_{q_1} \circ \Lambda_{q_2}$.
    \item The \textsc{Hex} diagrams for CoT prompting and CoT reasoning are both commutative.
\end{itemize}
\end{corollary}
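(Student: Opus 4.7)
The strategy is to reduce the corollary to the two preceding lemmas, handling the two directions of the biconditional independently. For the \emph{if} direction, I would assume the two listed conditions. The splitting assumption $\Lambda_q = \Lambda_{q_1} \circ \Lambda_{q_2}$ ensures that the concrete ground-truth output $\Lambda_q(\sigma)$ coincides with the terminal concrete state $\sigma_{i+1}$ of both Figure~\ref{fig:cot-reasoning} and Figure~\ref{fig:cot-prompting}. I would then invoke Lemma~\ref{lem:cot-reasoning} on the commutative CoT reasoning diagram to conclude that the output produced along the reasoning path is compatible with $\Lambda_q(\sigma_i)$, and Lemma~\ref{lem:cot-prompting} on the commutative CoT prompting diagram to conclude that the output produced by $(\gamma \circ \Lambda_{\hat{q_1 . q_2}} \circ \alpha)(\sigma_i)$ is also compatible with $\Lambda_q(\sigma_i)$. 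Transitivity of the compatibility relation $\equiv$ then yields the required agreement of the two outputs.

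For the \emph{only if} direction, I would read the premise ``produce the same output'' as ``both produce an output compatible with $\Lambda_q(\sigma_i)$'', which is the only sensible reading in this framework; two methods that agreed merely on an arbitrary wrong answer would be uninformative and would not match the ``valid'' terminology used in Lemmas~\ref{lem:cot-reasoning} and~\ref{lem:cot-prompting}. Under that reading, the CoT reasoning path produces an output compatible with $\Lambda_q(\sigma_i)$, so by the contrapositive of Lemma~\ref{lem:cot-reasoning} the CoT reasoning diagram commutes; symmetrically, Lemma~\ref{lem:cot-prompting} forces the prompting diagram to commute. The fact that the reasoning path, which realises $\Lambda_{q_2} \circ \Lambda_{q_1}$ at the concrete level, terminates at a state compatible with $\Lambda_q(\sigma_i)$ simultaneously forces $\Lambda_q \equiv \Lambda_{q_1} \circ \Lambda_{q_2}$, giving condition~(1).

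The hard part will be the \emph{only if} direction, specifically nailing down what ``produce the same output'' means in a framework where equality is replaced by a user-chosen compatibility relation $\equiv$ and where the ground truth need not be unique (as in the unicorn example). If the clause is read strictly as ``yield outputs compatible with each other but possibly both wrong,'' then neither diagram need commute and the corollary fails as stated. I would therefore fix the interpretation to ``both correctly solve $(q,\sigma)$ up to $\equiv$'', aligning it with the ``valid'' language of the two lemmas. Once that interpretation is pinned down, the remainder of the argument is a direct appeal to Lemmas~\ref{lem:cot-reasoning} and~\ref{lem:cot-prompting}, and the splitting condition drops out because the concrete bottom row of Figure~\ref{fig:cot-reasoning} is a composition of $\Lambda_{q_1}$ and $\Lambda_{q_2}$ by construction.
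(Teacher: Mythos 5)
The paper itself gives no explicit proof of Corollary~\ref{cor:cot} --- it is presented as an immediate consequence of Lemmas~\ref{lem:cot-reasoning} and~\ref{lem:cot-prompting}, whose own proofs are one-line appeals to the principle that commutativity rules out hallucination and misalignment. Your plan of reducing both directions of the biconditional to those two lemmas is therefore exactly the route the paper implicitly takes, and your \emph{if}-direction argument (both diagrams commute $\Rightarrow$ both outputs compatible with $\Lambda_q(\sigma_i)$ $\Rightarrow$ outputs compatible with each other by transitivity of $\equiv$) is the natural way to fill in the unstated details.

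Your most valuable contribution is the observation that the \emph{only if} direction is genuinely ambiguous, and that the corollary is \emph{false} under the literal reading of ``produce the same output'' as ``produce outputs compatible with each other.'' This is correct: if the model returns an identical wrong state along both the prompting and the reasoning paths (say $z \mapsto 19$ for the $12+13$ example), the two methods agree while neither diagram commutes, breaking the claimed biconditional. Your repair --- reading ``same output'' as ``both produce an output compatible with $\Lambda_q(\sigma_i)$,'' i.e.\ both are \emph{valid} in the sense of Lemmas~\ref{lem:cot-reasoning} and~\ref{lem:cot-prompting} --- is the only reading under which the corollary is a theorem, and it aligns with the paper's informal intent. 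One residual subtlety worth flagging, which you do not raise and which the paper's lemmas silently absorb: even under the repaired reading, a correct final state does not by itself force the \emph{interior} columns of Figure~\ref{fig:cot-reasoning} to be consistent (an error in step one could be compensated in step two). The paper's Lemma~\ref{lem:cot-reasoning} declares ``valid $\iff$ commutes,'' so this issue is pushed into the lemma rather than the corollary, but a fully rigorous treatment would have to rule out compensating errors before claiming full commutativity of the composed diagram. Finally, note that the paper writes the splitting condition as $\Lambda_q = \Lambda_{q_1} \circ \Lambda_{q_2}$ in the corollary but as $\Lambda_q(\sigma_i) = (\Lambda_{q_2} \circ \Lambda_{q_1})(\sigma_i)$ in the lemmas; the order of composition is swapped, which appears to be a typo you might silently correct to the lemma form.
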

CoT prompting and CoT reasoning are equivalent only under the conditions given by Corollary~\ref{cor:cot}. However, many systems (including engineering APIs) rely on CoT prompting without verifying this. This can lead to incorrect outcomes, as we will illustrate. In general, \textit{CoT prompting is not a sufficient for enabling CoT reasoning!}

The following example shows a zero-shot prompting task for GPT-4: generate TikZ code to draw a circle inside a hexagon, and a square inside the circle. However, GPT-4's solution is incorrect. It draws a circle inside a hexagon and a square inside the circle, as shown below.

\begin{mytextbox}
\textbf{Zero-shot prompting with GPT-4} \\

\textbf{Problem}: Draw a circle inside an hexagon and a square inside the circle using TikZ. \\

\textbf{Prompt:}

\begin{verbatim}
final_answer = GPT4("You are a programmer that uses TikZ to generate diagrams. 
   Draw a circle inside an hexagon and a square inside the circle.")
\end{verbatim}

\textbf{Result after compiling the generated code:}
\begin{center}
\begin{tikzpicture}
  \draw (0:1) \foreach \i in {60,120,...,360} {  -- (\i:1) };
  \draw (0,0) circle (1);
  \pgfmathsetmacro{\squareSide}{1/sqrt(2)}
  \draw (-\squareSide/2,-\squareSide/2) rectangle (\squareSide/2,\squareSide/2);
\end{tikzpicture}
\end{center}
\end{mytextbox}

 \noindent
With CoT prompting, or trying to solve the problem step by step in the same prompt, GPT-4 draws a square inside a circle and the square inside another circle, as shown below. Unfortunately, this is still incorrect and in fact, worse than the zero-shot instance, which did not draw an incorrect shape (the square instead of a hexagon).

\begin{mytextbox}
\textbf{CoT prompting with GPT-4}\\

\textbf{Prompt:} 

\begin{verbatim}
final_answer = GPT4("You are a programmer that uses TikZ to generate diagrams. 
   First, draw a hexagon. Then, draw a circle inside the hexagon. Then, draw 
   a square inside the circle.")
\end{verbatim}

\textbf{Result after compiling the generated code:}
\begin{center}

\begin{tikzpicture}
\draw (0:1cm) -- (60:1cm) -- (120:1cm) -- (180:1cm) -- (240:1cm) -- (300:1cm) -- cycle; 

\draw (0,0) circle (1cm);

\draw (-0.707cm,-0.707cm) rectangle (0.707cm,0.707cm);
    
\end{tikzpicture}

\end{center}




\end{mytextbox}

\noindent
Finally, with CoT reasoning, GPT-4 draws the correct figure.
\begin{mytextbox}
\textbf{CoT reasoning with GPT-4:}
\texttt{}\\

\textbf{Prompt1:} 

\begin{verbatim}
answer1 = GPT4("You are a programmer that uses TikZ to generate diagrams.
   Draw an hexagon.")
\end{verbatim}

\textbf{Prompt2:} 

\begin{verbatim}
answer2 =GPT4("You are a programmer that uses TikZ to generate diagrams. 
   A hexagon can be drawn as {answer1}. Draw a circle inside the hexagon.")
\end{verbatim}

\textbf{Prompt3} 
\begin{verbatim}
final_answer =GPT4("You are a programmer that uses TikZ to generate 
   diagrams. An hexagon can be drawn as {answer1}. A circle inside the 
   hexagon can be drawn as {answer2}. Draw a square inside the circle.")
\end{verbatim}

\textbf{Result after compiling the generated code:}
\begin{center}
\begin{tikzpicture}
\draw (0,0) -- (1,0) -- (1.5,1) -- (1,2) -- (0,2) -- (-0.5,1) -- cycle;

\coordinate (center) at (0.5, 1);

\pgfmathsetmacro{\radius}{sqrt(3)/2};

\draw (center) circle (\radius);

\pgfmathsetmacro{\squareSide}{\radius * sqrt(2)};

\draw (center) ++(-\squareSide/2, -\squareSide/2) rectangle ++(\squareSide, \squareSide);

\end{tikzpicture}
\end{center}

\end{mytextbox}

\noindent
Corollary~\ref{cor:cot} has important consequences on familiar concepts in the LLM space. These include self-verification and prompt engineering, which we will discuss next.

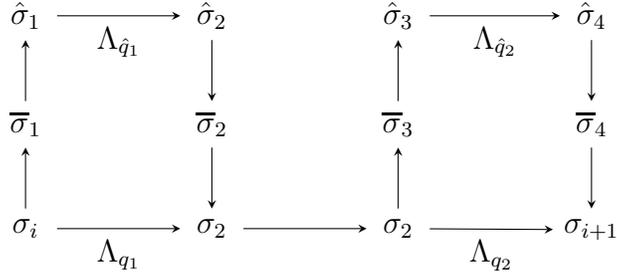
\begin{figure}[t]
\centering
\begin{tikzpicture}
  \matrix (m) [matrix of math nodes,row sep=2em,column sep=4em,minimum width=2em]
  {  \hat{\sigma}_1  & \hat{\sigma}_2  &  \hat{\sigma}_3  & \hat{\sigma}_4    \\
     \overline{\sigma}_1 &  \overline{\sigma}_2 &  \overline{\sigma}_3 &  \overline{\sigma}_4 \\
     \sigma_i  & \sigma_2 & \sigma_2  & \sigma_{i+1} \\};
  \path[-stealth]

    (m-3-1) edge node [right] {} (m-2-1)
    (m-2-1) edge node [right] {} (m-1-1)
    (m-1-2) edge node [right] {} (m-2-2)
    (m-2-2) edge node [right] {} (m-3-2)
    (m-3-1) edge node [below] {$\Lambda_{q_1}$} (m-3-2)
    (m-1-1) edge node [below] {$\Lambda_{\hat{q}_1}$} (m-1-2)
    (m-3-2) edge node[below] {} (m-3-3)
    (m-3-3) edge node [right] {} (m-2-3)
    (m-2-3) edge node [right] {} (m-1-3)
    (m-1-4) edge node [right] {} (m-2-4)
    (m-2-4) edge node [right] {} (m-3-4)
    (m-3-3) edge node [below] {$\Lambda_{q_2}$} (m-3-4)
    (m-1-3) edge node [below] {$\Lambda_{\hat{q}_2}$} (m-1-4);
\end{tikzpicture}
\caption{Self-verification.}
\label{fig:self-verification}
\end{figure}

\subsection{Self-verification}
\label{subsec:self-verification}
Self-verification~\cite{gero2023selfverification,weng2023large} is the ability of LLMs to assess the validity of their own output. In problems solved with self-verification, the same LLM plays two different roles: 1) it is a generator of solutions, and 2) a discriminator of potentially wrong answers. But how can an LLM that has produced a wrong answer check itself? We explain this using the \textsc{Hex} framework.

Self verification involves two sub-problems in one. First, a problem $(q_1, \sigma_1)$ that generates an output state $\sigma_2 = \Lambda_{q_1}(\sigma_1)$. Second, 
a verification problem $(q_2, \sigma_2)$ that takes $\sigma_2$ as input and produces a state $\sigma_{3}$ that indicates whether $\sigma_2$ is correct or not. This process can be represented by composing the \textsc{Hex} diagrams for the two sub-problems $(q_1, \sigma_1)$ and $(q_2, \sigma_2)$. This is analogous to the CoT reasoning. Through \textsc{Hex}, there is a theoretical justification for the empirical evidence in the literature that self-verification improves accuracy: a commutative diagram, like the one in Figure~\ref{fig:self-verification}, where the two sub-diagrams are the generation and verification steps.

\begin{mytextbox}
\textbf{Self-verification with generation and verification in the same prompt}
\texttt{}\\

\textbf{Prompt:} 

\begin{verbatim}
answer = GPT4("You are an expert in political history. Give me a list of 
   names of 20 politicians that were born in the city of Chicago. After 
   generating the list of names, create another list with the cities where they 
   were born.")
\end{verbatim}

\textbf{Answer:} 
\begin{verbatim}
List of Politicians born in Chicago:

1. Hillary Clinton 2. Nancy Pelosi 3. Carol Moseley Braun 4. Rahm Emanuel 
5. Dan Rostenkowski 6. Harold Washington 7. Bobby Rush 8. Rod Blagojevich
9. Timuel Black 10. John W. E. Thomas

Their respective birth cities (all in Chicago): 

1. Chicago, Illinois 2. Chicago, Illinois 3. Chicago, Illinois 4. Chicago, 
Illinois 5. Chicago, Illinois 6. Chicago, Illinois 7. Chicago, Illinois 
8. Chicago, Illinois 9. Chicago, Illinois 10. Chicago, Illinois
\end{verbatim}
\end{mytextbox}

The simple example above illustrates this. We combine the generation and verification steps in the same prompt. In the generation step, the model produces a list of 10 names, but it has no way of asserting if they were all born in Chicago. In fact, Nancy Pelosi, Bobby Rush and Timuel Black were born elsewhere. However, when we split the task into two steps -- generating the list and verifying it -- we see that the model can identify its own mistakes.

On the other hand, as seen below, splitting the problem into several LLM calls, rather than one unified prompt, helps in this example too. This is just an illustration, and the self-verification literature actually uses separate prompts for generating and verifying answers.
\begin{mytextbox}
\textbf{Self-verification with verification in a separate prompt}

\textbf{Prompt:} 
\begin{verbatim}
answer = GPT4("You are an expert in political history. Give me a list with 
   the cities where these politicians were born: 1. Hillary Clinton 2. Nancy 
   Pelosi 3. Carol Moseley Braun 4. Rahm Emanuel 5. Dan Rostenkowski 
   6. Harold Washington 7. Bobby Rush 8. Rod Blagojevich 9. Timuel Black 
   10. John W. E. Thomas.")
\end{verbatim}

\textbf{Answer:} 
\begin{verbatim}
1. Hillary Clinton - Born in Chicago, Illinois, United States
2. Nancy Pelosi - Born in Baltimore, Maryland, United States
3. Carol Moseley Braun - Born in Chicago, Illinois, United States
4. Rahm Emanuel - Born in Chicago, Illinois, United States
5. Dan Rostenkowski - Born in Chicago, Illinois, United States
6. Harold Washington - Born in Chicago, Illinois, United States
7. Bobby Rush - Born in Albany, Georgia, United States
8. Rod Blagojevich - Born in Chicago, Illinois, United States
9. Timuel Black - Born in Birmingham, Alabama, United States
10. John W. E. Thomas - Born in Springfield, Illinois, United States.
\end{verbatim}
\end{mytextbox}

Generating and verifying answers may involve multiple sub-problems, each of which could benefit from CoT reasoning. For instance, \cite{weng2023large} used CoT prompting to generate questions for verification, but we argue that CoT reasoning could also help to decompose the generation and verification processes further, when they require multiple steps. Some empirical evidence supports this idea. The recent \textit{chain-of-verification} work \cite{dhuliawala2023chainofverification} compared the performance of LLMs when they used the same prompt for both planning and executing the verification, or when they used separate prompts for each step. The results showed that the latter approach, which follows CoT reasoning principles, outperformed the former, which follows CoT prompting principles.

\subsection{Prompt programming}
\label{subsec:prompt-programming}

The text structure of a prompt can affect how large language models (LLMs) process and respond to it. We need a method to create and evaluate prompts that match our goals and expectations for LLMs, and that suit their text-processing capabilities. This is what prompt programming is about: a systematic way to design and test prompts for LLMs \cite{Reynolds2021}.

A clear and concise prompt has two basic components: a query and a state. The query tells the model what the user wants to do, such as summarize a text or caption an image. The state provides the model with the data or information it needs to do it, such as the text or the image. This separation makes it easier to design prompts that can adapt to different scenarios.

The OpenAI API in Azure~\cite{azureOpenAI} (and other platforms where GPT-4 is available) provides a natural way to separate the query and the state in the prompt. The API lets users structure the prompt into three sections: \texttt{"system"}, \texttt{"user"}, and \texttt{"assistant"}. These sections correspond to the key elements of the \textsc{Hex} diagram in Figure~\ref{fig:hex}:
\begin{itemize}
\item The \texttt{"system"} section defines the role of the LLM. It is the query component of the prompt, $\bar{q}$, that expresses the question $q$ that we want to answer.
\item The \texttt{"user"} section provides the specifics of the question. It is the state component of the prompt, $\bar{\sigma}_i$, that represents the textual version of the initial state $\sigma_i$.
\item The \texttt{"assistant"} section allows the user to write the state $\bar{\sigma}_{i+1}$ explicitly. This is useful for few-shot learning approaches, where multiple pairs of prompts and responses are given to the LLMs before the final question is asked.
\end{itemize}

The following is an example of using the \texttt{Python} API to write a zero-shot prompt for adding two numbers. This is the same example as in Section~\ref{subsec:examples}.

\begin{mytextbox}
\textbf{Zero-shot learning}

\begin{verbatim}
import openai
...

response = openai.ChatCompletion.create(
    engine="gpt-4",
    messages=[
        {"role": "system", "content": "you are calculator that can compute 
          the sum of x and y."},
        {"role": "user", "content": " x=12 and y =13."}])

LLM output: "The sum of x and y is 12 + 13 = 25."
\end{verbatim}
\end{mytextbox}
\noindent
The same API also supports few-shot learning, which requires some examples of questions and answers that are related to the query as shown below.
\begin{mytextbox}
\textbf{Few-shot learning}
\begin{verbatim}
import openai
...
response = openai.ChatCompletion.create(
    engine="gpt-4",
    messages=[
        {"role": "system", "content": "Assistant is an intelligent chatbot 
          designed to help users answer history and political questions. "},
        {"role": "user", "content": "Who was the first president of the 
          United States?"},
        {"role": "assistant", "content": "George Washington."},
        {"role": "user", "content": "What is the capital of France?"},
        {"role": "assistant", "content": "The capital of France is Paris."},
        {"role": "user", "content": "Who wrote the novel Pride and  Prejudice?"},
        {"role": "assistant", "content": "Jane Austen."},
        {"role": "user", "content": "Who is the current Prime Minister of
           the United Kingdom?"}])

LLM output: "The current Prime Minister of the United  Kingdom is 
Rishi Sunak."
\end{verbatim}
\end{mytextbox}
\noindent
This API lets us create CoT prompts by providing the assistant with step-by-step solutions of the examples as its input.
\begin{mytextbox}
\textbf{Chain-of-thought prompting\footnote{Example 
adapted from \text{https://learnprompting.org/docs/intermediate/chain\_of\_thought}}}
\begin{verbatim}
import openai
...
response = openai.ChatCompletion.create(
    engine="gpt-4",
    messages=
        {"role": "system", "content": "Assistant is a travel planner "},
        {"role": "user", "content": "Which is a faster way to get home? 
          Option 1: Take a 10 minutes bus, then a 40 minute bus, and 
          finally a 10 minute train. Option 2: Take a 90 minutes train, 
          then a 45 minute bike ride, and finally a 10 minute bus."},
        {"role": "assistant", "content": "Option 1 will take 10+40+10 = 60 
          minutes. Option 2 will take 90+45+10=145 minutes. Since 
          Option 1 takes 60 minutes and Option 2 takes 145 minutes, 
          Option 1 is faster."},
        {"role": "user", "content": "Which is a faster way to get to work? 
          Option 1: Take a 1000 minute bus, then a half hour train, and
          finally a 10 minute bike ride. Option 2: Take an 800 minute bus, 
          then an hour train, and finally a 30 minute bike ride."}])
\end{verbatim}
\end{mytextbox}

\begin{mytextbox}
\begin{verbatim}
LLM output: "Option 1 will take 1000+30+10 = 1040 minutes. Option 2 
will take 800+60+30 = 890 minutes. Since Option 2 takes 890 minutes 
and Option 1 takes 1040 minutes, Option 2 is faster."
\end{verbatim}
\end{mytextbox}

\noindent
This API gives the user a lot of flexibility to structure the problem and assign different roles to the parts of the prompt. However, it does not show how the service combines these components internally. As Section \ref{fig:cot-reasoning} explains, CoT reasoning involves strong assumptions and multiple calls to the LLM (one for each step), unlike CoT prompting that uses a single prompt. This enables more diverse and effective ways to use and deploy models like GPT-4 with CoT reasoning ideas. This is especially helpful in domains like healthcare, where a domain expert needs to define and execute sub-tasks of a problem. We hope that this work contributes to advancing these directions.

\section{Discussion}\label{sec:discussion}
LLMs are widely used in AI for various tasks, applications and services, but they also pose challenges and risks, such as generating inaccurate or misleading outputs, or failing to align with human values and expectations. We presented a mathematical framework to analyse, compare and improve LLMs in a systematic way. Our framework defines hallucination and alignment, and reveals the factors that affect our interactions with LLMs. We show that a common method of prompting LLMs, which splits a query into sub-queries, does not ensure sound reasoning. This finding has important implications for building reliable and robust AI systems with LLMs. The key idea is how to guarantee the self-consistency of the model, or how to improve both models and prompts to ensure factually correct answers that lead to commutative diagrams. Depending on the problem, this may require more detailed instructions in a CoT prompting fashion, or multiple calls to the model. We do not provide a recipe for each case, but we characterize these issues and highlight the differences. 

Our mathematical framework reveals some promising directions for further research on LLMs. A key challenge in AI is to ensure that LLMs learn abstract representations from data that respect some invariances or general principles. On that note, the \textsc{Hex} diagram shows how its components relate to various design principles or choices for LLMs. For example, prompt engineering aims to find good abstraction mappings $\alpha_c$ that make the model's input and output more consistent. On the other hand, $\gamma_c$ determines how the model produces answers. In the current conversational use of LLMs, this involves learning from human feedback (RLHF). We can also fine-tune the representations for the internal state $\hat{\sigma}_i$ and the mapping $\Lambda_{\hat{q}}$ for specific applications. We hope that our work helps address these challenges and accelerate progress in AI.

\subsection*{Acknowledgements}
We thank Javier Alvaraz-Valle, Shruthi Bannur and Ted Meeds for their valuable feedback, which helped us improve our work. We also thank Peter Lee, Sriram Rajamani and Desney Tan for inspiring conversations that sparked some of our ideas.

\vskip 0.2in
\bibliographystyle{alpha}

\newcommand{\etalchar}[1]{$^{#1}$}

\end{document}